\newcommand\norm[2][{}]{\ensuremath{\|#2\|}_{#1}}
\newcommand{\vect}[1]{\mathbf{#1}}
\newcommand{\vb}{\vect{b}}
\newcommand{\vd}{\vect{d}}
\newcommand{\vs}{\vect{s}}
\newcommand{\vv}{\vect{v}}  
\newcommand{\vx}{\vect{x}}  
\newcommand{\vy}{\vect{y}}  
\newcommand{\vz}{\vect{z}}
\newcommand{\mB}{\vect{B}} 
\newcommand{\mD}{\vect{D}}
\newcommand{\mI}{\vect{I}}
\newcommand{\mM}{\vect{M}}
\newcommand{\mS}{\vect{S}}
\newcommand{\mU}{\vect{U}}
\newcommand{\mV}{\vect{V}}
\newcommand{\mW}{\vect{W}}
\newcommand{\mX}{\vect{X}}
\newcommand{\grvect}[1]{{\bm{#1}}}
\newcommand{\vbeta}{\grvect{\beta}}
\newcommand{\vtheta}{\grvect{\theta}}
\newcommand{\vsigma}{\grvect{\sigma}}
\newcommand{\mSigma}{\grvect{\Sigma}}
\newcommand{\reals}{\mathbb R}
\DeclareMathOperator*{\argmin}{arg\,min}
\newcommand{\diag}[2][{}]{\mathrm{diag}_{#1}( #2)}
\newcommand{\set}[2][{}]{#1\{ #2 #1\}}
\newcommand{\paren}[2][{}]{#1( #2 #1)}
\newcommand{\restricted}[2][{}]{#1|_{#2}}
\def\setA{\mathcal A}
\def\setB{\mathcal B}
\newcommand\transp{\mathsf{T}}
\newcommand{\defeq}{\triangleq}
\newcommand{\rdefeq}{\defeq}
\newcommand{\mBhat}{\widehat{\mB}}
\newcommand{\mMhat}{\widehat{\mM}}
\newcommand{\mWhat}{\widehat{\mW}}
\newcommand{\Omegatilde}{\widetilde{\Omega}}
\newcommand{\vthetahat}{\widehat{\vtheta}}
\newcommand{\vbetahat}{\widehat{\vbeta}}
\newcommand{\omegatilde}{\tilde{\omega}}
\newcommand{\minimize}{\mathop{\mathrm{minimize}}}
\newtheorem{theorem}{Theorem}
\newtheorem{lemma}[theorem]{Lemma}
\newtheorem{proposition}[theorem]{Proposition}
\newtheorem{model}{Model}
\crefname{model}{model}{models}
\Crefname{model}{Model}{Models}
\title{An Adaptive Tangent Feature Perspective \\
of Neural Networks}
\author{%
  Daniel LeJeune\textsuperscript{1}\thanks{Corresponding author.},
  ~Sina Alemohammad\textsuperscript{2} \\
  \textsuperscript{1}Stanford University, ~\textsuperscript{2}Rice University\\
  \texttt{daniel@dlej.net, sa86@rice.edu}
}
\begin{document}

\maketitle

\begin{abstract}
In order to better understand feature learning in neural networks, we propose and study linear models in tangent feature space where the features are allowed to be transformed during training.
We consider linear feature transformations, resulting in a joint optimization over parameters and transformations with a bilinear interpolation constraint. 
We show that this relaxed optimization problem has an equivalent linearly constrained optimization with structured regularization that encourages approximately low rank solutions. Specializing to structures arising in neural networks, we gain insights into how the features and thus the kernel function change, providing additional nuance to the phenomenon of kernel alignment when the target function is poorly represented by tangent features. 
We verify our theoretical observations in the kernel alignment of real neural networks.
\end{abstract}

\setcounter{footnote}{0}
\section{Introduction}

Tremendous research effort has been expended in developing and understanding neural networks~\cite{lecun1998gradient,LeCun2015,zhang2017understanding,sun2020global}.
In terms of development, this effort has been met with commensurate tremendous practical success, dominating the state of the art~\cite{krizhevsky2012imagenet,sutskever2014sequence,bubeck2023sparks} and establishing a new normal of replacing intricately engineered solutions with the conceptually simpler approach of learning from data~\cite{sutton2019bitter}.

Paradoxically, this simple principle of learning has not made the theoretical understanding of the success of neural networks easier~\cite{zhang2017understanding,pmlr-v80-belkin18a}. Neural networks stand in stark contrast to the engineered solutions that they have replaced---instead of leveraging vast amounts of human expertise about a particular problem, for which theoretical guarantees can be specifically tailored, neural networks appear to be universal learning machines, adapting easily to a wide range of tasks given enough data. The tall task of the theoretician is to prove that neural networks efficiently learn essentially any function of interest, while being trained through an opaque non-convex learning process on data with often unknown and mathematically uncharacterizable structure.

One promising theoretical direction for understanding neural networks has been through linearizing networks using the neural tangent kernel (NTK) framework~\cite{jacot2018ntk,Lee_2020}. Given an appropriate initialization, infinitely wide neural networks can be shown to have constant gradients throughout training, such that the function agrees with its first-order Taylor approximation and is therefore a linear model, where the (tangent) features are the gradients of the network at initialization. The NTK framework reduces the complexity of neural networks down to linear (kernel) regression, which is significantly better theoretically understood~\cite{wahba1990spline,pmlr-v80-belkin18a,liang2020interpolate,wu2020optimal}. However, real neural networks still outperform their NTK approximants~\cite{vyas2023empirical}, and the fundamental assumption of the NTK---that the gradients do not change during training---is typically not satisfied in theory or practice~\cite{chizat2019lazy,pmlr-v145-seleznova22a}.

In this work, we take a step towards understanding the effects of the change of the gradients and how these changes allow the features to adapt to data. Rather than considering neural networks directly, we consider a relaxation of the problem in which the tangent features are allowed to adapt to the problem alongside the regression coefficients.
We ask and answer the following questions:

\textit{
    If allowed independent choice of features and coefficients near initialization, what solution is preferred? \\
    And can we explain observed feature alignment phenomena in neural networks by this mechanism?
}

Our specific contributions are as follows.

\begin{itemize}
    \item We introduce a framework of linear feature adaptivity enabling two complementary views of the same optimization problem: as regression using adaptive features, and equivalently as structured regression using fixed features.
    \item We show how restricting the adaptivity imposes specific regularizing structure on the solution, resulting in a group approximate low rank penalty on a neural network based model.
    \item We consider the resulting adapted kernel and provide new insights on the phenomenon of NTK alignment \cite{pmlr-v130-baratin21a,atanasov2022neural,pmlr-v162-seleznova22a,seleznova2023neural}, specifically when the target function is poorly represented using the initial tangent features. 
\end{itemize}

Our work is quite far from an exact characterization of real neural networks.
Nevertheless, our framework extends the class of phenomena can be explained by linear models built on tangent features, and so we believe it to be a valuable contribution towards understanding neural networks.

\paragraph{Related work.}
The neural tangent kernel has been proposed and studied extensively in the fixed tangent feature regime~\cite{jacot2018ntk,Lee_2020,alemohammad2021the,vyas2023empirical}. Recent works have studied the alignment of the tangent kernel with the target function: 
\citet{pmlr-v130-baratin21a} empirically demonstrate tangent kernel alignment, and 
\citet{atanasov2022neural} show that linear network tangent kernels align with the target function early during training. \citet{pmlr-v162-seleznova22a} demonstrate alignment under certain initialization scalings, and characterize kernel alignment and neural collapse under a block structure assumption on the NTK
\cite{seleznova2023neural}.
In contrast, by characterizing the adaptive feature learning problem instead of neural networks specifically, we are able to gain more nuanced insights about kernel alignment. More similarly to our work, Radhakrishnan et al.~\cite{radhakrishnan2022feature,beaglehole2023mechanism} show that the weight matrices in neural networks align with the average outer product of gradients.

The idea of simultaneously learning features and fitting regression models has appeared in the literature in tuning kernel parameters \cite{Chapelle2002}, multiple kernel learning~\cite{JMLR:v12:gonen11a}, and automatic relevance determination (ARD) \cite{neal2012bayesian}, which has been shown to correspond to a sparsifying iteratively reweighted $\ell_1$ optimization \cite{wipf2007automatic}. Other areas in which joint factorized optimization results in structured models include matrix factorization~\cite{srebro2004factorization} and adaptive dropout~\cite{lejeune2021flipside}, which are equivalent to iteratively reweighted $\ell_2$ optimization. Our work provides generic results on optimization of matrix products with rotationally invariant penalties, which complements the existing literature.

All proofs can be found in \Cref{sec:proofs}.

\section{An adaptive feature framework}

We first formalize our adaptive feature framework, which enables us to jointly consider feature learning and regression. Our formulation is
motivated by highly complex overparameterized models such as neural networks with rich tangent feature spaces.

\paragraph{Notation.}
Given a vector $\vv_\vx \in \reals^P$ parameterized by another vector $\vx \in \reals^Q$, we use the ``denominator'' layout of the derivative such that $\nabla_\vx \vv_\vx = \partial \vv_\vx / \partial \vx \in \reals^{P \times Q}$, and given a scalar $v_\mX \in \reals$ parameterized by a matrix $\mX \in \reals^{P \times Q}$, we orient $\nabla_\mX v_\mX \in \reals^{Q \times P}$.
The vectorization of a matrix $\mX = \begin{bmatrix}
    \vx_1 & \ldots & \vx_Q
\end{bmatrix} \in \reals^{P \times Q}$ is the stacking of columns such that $\mathrm{vec}(\mX)^\transp = \begin{bmatrix}
    \vx_1^\transp & \ldots & \vx_Q^\transp
\end{bmatrix} \in \reals^{PQ}$.
For $\vx \in \reals^{\min\set{P, Q}}$, we denote by $\diag[P \times Q]{\vx} \in \reals^{P \times Q}$ the (possibly non-square) matrix with $\vx$ along the main diagonal, and we omit the subscript $P \times Q$ when $P = Q$. 
We denote the set $\set{1, \ldots, N}$ by $[N]$.
Given a vector $\vx \in \reals^{P}$, we denote by $[\vx]_j$ for $j \in [P]$ the $j$-th coordinate of $\vx$.
Given a matrix $\mX \in \reals^{P \times Q}$, we let $\sigma_i(\mX)$ denote its $i$-th largest singular value and $[\mX]_{:j}$ for $j \in [Q]$ its $j$-th column. 
The characteristic function $\chi_\setA$ of a set $\setA$ satisfies $\chi_\setA(\vx) = 0$ for $\vx \in \setA$ and $\chi_\setA(\vx) = \infty$ for $\vx \notin \setA$.

\subsection{First-order expansion with average gradients}

Consider a differentiably parameterized function $f_\vtheta \colon \reals^D \to \reals^C$ with parameters $\vtheta \in \reals^P$, such as a neural network. Our goal is to fit this function to data $(\vx_1, \vy_1), \ldots, (\vx_N, \vy_N) \in \reals^{D \times C}$ by solving
\begin{align}
    \mathop{\mathrm{minimize}}_{\vtheta \in \reals^P} \; \sum_{i=1}^N \ell(\vy_i, f_\vtheta(\vx_i)),
\end{align}
where $\ell \colon \reals^C \times \reals^C \to \reals$ is some loss function.
In order characterize the solution, we need to understand how $f_\vtheta$ changes with $\vtheta$. One way that we can understand $f_\vtheta$ is through the fundamental theorem of calculus for line integrals.
Letting $\vtheta_0$ be some reference parameters, such as random initialization or pretrained parameters,
\begin{align}
    f_{\vtheta}(\vx) - f_{\vtheta_0}(\vx)
    = \int_{\vtheta_0}^{\vtheta} \nabla_{\vtheta'} f_{\vtheta'}(\vx) d \vtheta'
    = \underbrace{\paren[\Big]{\int_{0}^{1} \nabla_{\vtheta'}  f_{\vtheta'}(\vx) \big|_{\vtheta' = (1 - t)\vtheta_0 + t \vtheta} dt}}_{\rdefeq 
    \overline{\nabla f_{\vtheta}}(\vx)
    } (\vtheta - \vtheta_0).
\end{align}
That is, any such model is a linear predictor using the average \textbf{tangent features} $\overline{\nabla f_{\vtheta}}(\vx)$ and coefficients $\vtheta - \vtheta_0$.
When $\vtheta \approx \vtheta_0$, we should expect that $\overline{\nabla f_{\vtheta}}(\vx) \approx \nabla_{\vtheta_0} f_{\vtheta_0}(\vx)$, which are the tangent features at initialization. In fact, this has been shown to hold for very wide neural networks in the ``lazy training'' regime even for $\vtheta$ at the end of training, in which case the problem can be understood as kernel regression using the neural tangent kernel~\cite{jacot2018ntk,Lee_2020}. However, outside of those special circumstances, it is likely that $\overline{\nabla f_{\vtheta}}(\vx)$ will change with $\vtheta$. In general, it is difficult to say anything further about how $f_\vtheta$ should change with $\vtheta$, or what properties the optimal $\vtheta$ and $\overline{\nabla f_{\vtheta}}(\vx)$ for a prediction problem should have, and so we instead consider a related relaxation.

\subsection{Relaxation: Interpolation with factorized features}

Since overparameterized models typically admit infinitely many solutions, we first need to specify which solution we with to study. Due to the growing literature on interpolating predictors~\cite{pmlr-v80-belkin18a,bartlett2020benign,hastie2022surprises,dar2021farewell,pmlr-v162-donhauser22a} including neural networks, which are universal function approximators given enough parameters~\cite{Cybenko1989}, we consider the popular minimum $\ell_2$ deviation interpolating solution:
\begin{align}
    \mathop{\mathrm{minimize}}_{\vtheta \in \reals^P} \norm[2]{\vtheta - \vtheta_0} 
    \;\; \text{s.t.} \;\;
    f_\vtheta(\vx_i) = \widehat{\vy}_i \defeq \argmin_{\widetilde{\vy} \in \reals^C} \ell(\vy_i, \widetilde{\vy})
    \;\forall\, i \in [N].
\end{align}
This choice aligns with the common practice of training by gradient descent until training error is equal to zero.
In classification problems, the minimizers are typically infinite valued and never realized unless there is label noise, in which case we should let $\widehat{\vy}_i \defeq \argmin_{\widetilde{\vy} \in \reals^C} \sum_{j \colon \vx_j = \vx_i} \ell(\vy_j, \widetilde{\vy})$.

We now relax the problem to enable tractable analysis of feature adaptivity.

Consider a particular value of $\vtheta$. If $P > N$ and the tangent feature space at initialization is very rich, then the range of the tensor formed by stacking all of the $\nabla_{\vtheta_0} f_{\vtheta_0}(\vx_i)$ is likely to span $\reals^{C \times N}$. As a result, there exists 
a matrix $\mM_\vtheta \in \reals^{P \times P}$ such that for all $\vx_i$, $\overline{\nabla f_{\vtheta}}(\vx_i) = \nabla_{\vtheta_0} f_{\vtheta_0}(\vx_i) \mM_\vtheta$. Going one step further, if $N$ is sufficiently large and the gradients change sufficiently slowly in $\vx$, we would expect $\mM_\vtheta$ to exist such that the linear equality holds even for test points $\vx$ not in the training data. 
This matrix $\mM_\vtheta$ of course depends complexly and intricately on the parameterization of $f_\vtheta$ and its initial parameters $\vtheta_0$, and is no simpler to understand than the average tangent features.

Our key insight is that in complex nonlinear models such as deep neural networks, there may be many degrees of freedom in the parameters $\vtheta$ that can change the tangent features without changing the dimensions of $\vtheta$ that interact with those features. 
In this way, we speculate that such complex models are able to optimize both the features and coefficients jointly to serve the prediction goal.
Based on this insight, we relax $\mM_\vtheta$ to be independent of $\vtheta$, instead now a new variable $\mM \in \reals^{P \times P}$ to be optimized.
Since we have applied an $\ell_2$ deviation penalty on $\vtheta$, we need to also require that $\mM$ should also not deviate much from its initial value of $\mM = \mI_P$. Hence finally, for some appropriate regularizer $\Omega \colon \reals^{P \times P} \to \reals$, we have our relaxed optimization problem
\begin{align}
    \label{eq:adaptive-kernel-learning}
    \mMhat,\, \vthetahat \defeq \argmin_{\mM \in \reals^{P \times P},\, \vtheta \in \reals^P} \, 
    \Omega(\mM) + \norm[2]{\vtheta - \vtheta_0}^2
    \;\; \text{s.t.} \;\;
    \widehat{\vy}_i - f_{\vtheta_0}(\vx_i) = \nabla_{\vtheta_0} f_{\vtheta_0}(\vx_i) \mM (\vtheta - \vtheta_0)
    \;\forall\, i \in [N].
\end{align}

\section{Adaptive feature learning}

The learning problem in~\cref{eq:adaptive-kernel-learning} is very similar to regularized linear interpolation, except that it has a bilinear interpolation constraint (linear individually in each $\mM$ and $\vtheta$) rather than a simple linear constraint. This joint optimization can be characterized instead by an \textbf{effective penalty} $\Omegatilde \colon \reals^P \to \reals$ applied to $\vbeta = \mM \vtheta$, or an equivalent learning problem
\begin{align}
    \label{eq:equivalent-optimization}
    \vbetahat = \argmin_{\vbeta \in \reals^P} \, 
    \Omegatilde(\vbeta)
    \;\; \text{s.t.} \;\;
    \widehat{\vy}_i - f_{\vtheta_0}(\vx_i) = \nabla_{\vtheta_0} f_{\vtheta_0}(\vx_i) \vbeta
    \;\forall \; i \in [N].
\end{align}
The resulting model can then be written as $f_{\vthetahat}(\vx) = f_{\vtheta_0}(\vx) + \nabla_{\vtheta_0} f_{\vtheta_0}(\vx) \vbetahat$, simply a linear model in the tangent features.
By mapping $\Omega$ to $\Omegatilde$, we can understand the effect of joint feature and coefficient learning on the use of the initial tangent features in the final model. Unfortunately, $\Omegatilde$ does not have an interpretable form in general.

However, a very natural restriction to the choice of $\Omega$ results in $\Omegatilde$ that is straightforward to describe and interpret.
A priori, we have no knowledge of which directions in feature space are most important, so we can encourage search in all directions equally 
by choosing $\Omega$ to be \emph{rotationally invariant}. 
We consider regularizers built from the following class of spectral regularizers 
for a strictly quasi-convex function $\omega \colon \reals \to \reals$ having minimum value $\omega(1) = 0$:
\begin{align}
    \label{eq:omega}
    \Omega_\omega(\mM) = \sum_{j=1}^P\omega(\sigma_j(\mM)).
\end{align}
This penalty applies only to the singular values $\sigma_j$, so the singular vectors of $\mM$ are free to be rotated arbitrarily. 
Note that this choice ensures a tendency towards the initial value of $\mM_{\vtheta_0} = \mI_P$, and it also means that we can consider symmetric positive semidefinite $\mM$ without loss of generality.\footnote{Let $\mU \mS \mV^\transp$ be the SVD of any $\mM$. We can always consider instead $\mM' = \mU \mS \mU^\transp$ and $\vtheta' = \mU \mV^\transp \vtheta$ such that the penalty value remains the same, and $\vbeta = \mM' \vtheta' = \mM \vtheta$ and thus the linear constraints to not change.}
A simple example of a choice for $\omega$ is $\omega(v) = |v - 1|^p$ for $p > 0$.

Some feature transformations might be particularly unnatural, especially when there is structure in the parameterization of the function, such as weight matrices at different layers of a neural network. We can encode this structure by applying a penalty $\Omega$ to sub-blocks of $\mM$ independently. As we shall see, the presence of such structure has a significant impact on the resulting effective optimization.

When considering solutions, we are also interested in the kernel corresponding to the learned features.
Specifically, for $\vx, \vx' \in \reals^D$ we define the \textbf{adapted kernel} as the feature inner products
\begin{align}
    K(\vx, \vx') \defeq 
    \overline{\nabla f_{\vthetahat}}(\vx)
    \overline{\nabla f_{\vthetahat}}(\vx')^\transp
    = 
    \nabla_{\vtheta_0} f_{\vtheta_0}(\vx) 
    \mMhat^2 
    \nabla_{\vtheta_0} f_{\vtheta_0}(\vx')^\transp \in \reals^{C \times C}.
\end{align}
We correspondingly define the initial kernel $K_0(\vx, \vx') = \nabla_{\vtheta_0} f_{\vtheta_0}(\vx) \nabla_{\vtheta_0} f_{\vtheta_0}(\vx')^\transp$ for $\mM = \mI_P$, which is equal to the standard neural tangent kernel in neural networks.

\subsection{Structureless feature learning}

It is instructive to first consider the effect of unrestricted optimization of $\mM$; that is, if the features were allowed to change without any structural constraints on how features and parameters must interact. In this case, we simply solve \cref{eq:adaptive-kernel-learning} directly, using $\Omega = \Omega_\omega$ applied to the full $\mM$.

\begin{theorem}
    \label{thm:structureless}
    There is a solution to \cref{eq:adaptive-kernel-learning} with $\Omega = \Omega_\omega$ satisfying
    \begin{align}
        \mMhat = \mI_P + 
        (s - 1) \norm[2]{\vbetahat}^{-2} \vbetahat \vbetahat^\transp 
        \quad \text{and} \quad
        \vthetahat = \vtheta_0 + s^{-1} \vbetahat.
    \end{align}
    where $s = \argmin_{z \geq 1} \omega(z) + \frac{\norm[2]{\vbetahat}^2}{z^2}$ and 
    $\vbetahat$ is given by \cref{eq:equivalent-optimization} with $\Omegatilde = \norm[2]{\cdot}$.
    Furthermore, the adapted kernel for this solution is given by
    \begin{align}
        K(\vx, \vx') = K_0(\vx, \vx') + (s^2 - 1) \norm[2]{\vbetahat}^{-2}
        \underbrace{(f_{\vthetahat}(\vx) - f_{\vtheta_0}(\vx))
        (f_{\vthetahat}(\vx') - f_{\vtheta_0}(\vx'))^\transp}_{\rdefeq K_{\widehat{\vy}}(\vx, \vx')}.
    \end{align}
\end{theorem}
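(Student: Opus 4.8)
The plan is to compute the effective penalty $\Omegatilde$ of \cref{eq:equivalent-optimization} in closed form and then read off $\mMhat$ and $\vthetahat$ from the construction that attains it. Write $\vdelta \defeq \vtheta - \vtheta_0$ and let $\vbeta \defeq \mM\vdelta$ denote the coefficient appearing in the interpolation constraint. Since the objective $\Omega_\omega(\mM) + \norm[2]{\vdelta}^2$ and the constraints depend on $(\mM, \vdelta)$ only through $\vbeta$ inside the constraints and otherwise separately, the joint minimum over the constraint set equals $\min_\vbeta \Omegatilde(\vbeta)$ subject to the linear constraints of \cref{eq:equivalent-optimization}, where $\Omegatilde(\vbeta) = \min\set{\Omega_\omega(\mM) + \norm[2]{\vdelta}^2 : \mM\vdelta = \vbeta}$. (The degenerate case $\vbetahat = \vzero$ is immediate with $\mMhat = \mI_P$ and $\vthetahat = \vtheta_0$, so assume $\vbetahat \neq \vzero$.)

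The heart of the argument is to show $\Omegatilde(\vbeta) = g(\norm[2]{\vbeta})$ with $g(r) \defeq \min_{z \geq 1}\omega(z) + r^2 z^{-2}$. For the upper bound I exhibit, for each $z \geq 1$, the feasible pair $\mM_z = \mI_P + (z-1)\norm[2]{\vbeta}^{-2}\vbeta\vbeta^\transp$ and $\vdelta_z = z^{-1}\vbeta$: one checks $\mM_z\vdelta_z = \vbeta$, and $\mM_z$ is symmetric with eigenvalues (hence singular values) equal to $z$ once and $1$ with multiplicity $P-1$, so $\Omega_\omega(\mM_z) = \omega(z) + (P-1)\omega(1) = \omega(z)$ and the objective value is $\omega(z) + \norm[2]{\vbeta}^2 z^{-2}$; minimizing over $z \geq 1$ gives $\Omegatilde(\vbeta) \leq g(\norm[2]{\vbeta})$. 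For the matching lower bound, any feasible $(\mM, \vdelta)$ has $\norm[2]{\vbeta} = \norm[2]{\mM\vdelta} \leq \sigma_1(\mM)\norm[2]{\vdelta}$ and $\Omega_\omega(\mM) \geq \omega(\sigma_1(\mM))$ since $\omega \geq \omega(1) = 0$, whence $\Omega_\omega(\mM) + \norm[2]{\vdelta}^2 \geq \omega(\sigma_1(\mM)) + \norm[2]{\vbeta}^2\sigma_1(\mM)^{-2}$; this is $\geq g(\norm[2]{\vbeta})$ when $\sigma_1(\mM) \geq 1$, and when $\sigma_1(\mM) < 1$ it is $\geq \norm[2]{\vbeta}^2 = \omega(1) + \norm[2]{\vbeta}^2 \cdot 1^{-2} \geq g(\norm[2]{\vbeta})$, using $\omega \geq 0$. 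Hence $\Omegatilde = g \circ \norm[2]{\cdot}$.

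Since for each fixed $z \geq 1$ the map $r \mapsto r^2 z^{-2}$ is increasing, $g$ is non-decreasing, so the minimum-norm interpolant $\vbetahat$ — the solution of $\argmin\norm[2]{\vbeta}$ over the affine interpolation constraints, i.e., \cref{eq:equivalent-optimization} with $\Omegatilde = \norm[2]{\cdot}$ — also minimizes $\Omegatilde(\vbeta) = g(\norm[2]{\vbeta})$. Setting $s = \argmin_{z \geq 1}\omega(z) + \norm[2]{\vbetahat}^2 z^{-2}$ and taking $z = s$ in the construction above yields $\mMhat = \mM_s$ and $\vthetahat = \vtheta_0 + \vdelta_s = \vtheta_0 + s^{-1}\vbetahat$, as claimed. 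The kernel formula then follows from the routine expansion
\begin{align}
    \mMhat^2 &= \mI_P + \bigl(2(s-1) + (s-1)^2\bigr)\norm[2]{\vbetahat}^{-2}\vbetahat\vbetahat^\transp = \mI_P + (s^2-1)\norm[2]{\vbetahat}^{-2}\vbetahat\vbetahat^\transp,
\end{align}
inserted into $K(\vx,\vx') = \nabla_{\vtheta_0}f_{\vtheta_0}(\vx)\,\mMhat^2\,\nabla_{\vtheta_0}f_{\vtheta_0}(\vx')^\transp$, together with $\nabla_{\vtheta_0}f_{\vtheta_0}(\vx)\vbetahat = f_{\vthetahat}(\vx) - f_{\vtheta_0}(\vx)$.

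The step I expect to be the main obstacle is the closed-form evaluation of $\Omegatilde$: proving the lower bound over all $\mM$ (not just the symmetric rank-one-plus-identity family, and allowing arbitrary singular vectors) and, within that, that restricting the scalar problem to $z \geq 1$ is without loss — precisely where $\omega(1) = 0$ being the global minimum of $\omega$ and the monotonicity of $z \mapsto \norm[2]{\vbeta}^2 z^{-2}$ are used together. Passing from the $\vbeta$-problem to the minimum-norm interpolant via monotonicity of $g$, and the concluding kernel algebra, are then straightforward.
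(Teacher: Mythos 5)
Your proof is correct, and it takes a genuinely more self-contained route than the paper. The paper proves \Cref{thm:structureless} as a corollary of a general factorization lemma (\Cref{lemma:equivalent_opt}): one argues via an SVD rearrangement that the singular vectors of $\mM_1$, $\mW$, and $\mB$ can be taken to be shared and aligned, reduces to a separable scalar problem over singular values, and then specializes to $Q=1$, $\omega_2 = \chi_{\set{1}}$; the same lemma is reused to prove \Cref{thm:model:nn}. You instead compute the effective penalty $\Omegatilde(\vbeta)$ directly by a two-sided bound: the explicit feasible family $\mM_z = \mI_P + (z-1)\norm[2]{\vbeta}^{-2}\vbeta\vbeta^\transp$, $\vdelta_z = z^{-1}\vbeta$ gives the upper bound $\omegatilde(\norm[2]{\vbeta})$, and the elementary inequalities $\norm[2]{\mM\vdelta} \leq \sigma_1(\mM)\norm[2]{\vdelta}$ and $\Omega_\omega(\mM) \geq \omega(\sigma_1(\mM)) \geq 0$ (with the separate case $\sigma_1(\mM) < 1$, where $\omega(1)=0$ and monotonicity of $r \mapsto r^2$ close the gap) give the matching lower bound, so no alignment-of-singular-vectors argument is needed at all in this rank-one setting. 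Your monotonicity step ($g$ non-decreasing, hence the minimum-$\ell_2$-norm interpolant minimizes $g \circ \norm[2]{\cdot}$) and the kernel algebra match the paper's. What your approach buys is an elementary, lemma-free proof of the structureless case that makes explicit where $\omega(1)=0$ and the restriction to $z \geq 1$ enter; what the paper's approach buys is generality, since the SVD-alignment lemma is exactly what is needed to handle the blockwise matrix structure of \Cref{model:nn}, where your operator-norm bound alone would not suffice. The only points to state carefully (both also implicit in the paper) are attainment of the scalar minimum defining $s$ and the degenerate case $\vbetahat = \vzero$, which you handle.
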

\begin{proof}[Proof sketch]
    By a straightforward argument, the leading singular vector of $\mM$ must be aligned with $\vtheta - \vtheta_0$ to minimize $\norm[2]{\vtheta - \vtheta_0}^2$ given the singular values of $\mM$. For the leading singular value $s$, the equivalent solution must satisfy $\norm[2]{\vbeta} = s \norm[2]{\vtheta}$. We can obtain $s$ given $\norm[2]{\vbeta}$ by performing the minimization of the penalty $\omega(s) + \norm[2]{\vtheta}^2 = \omega(s) + \tfrac{\norm[2]{\vbeta}^2}{s^2}$, which is an increasing function of $\norm[2]{\vbeta}$.
\end{proof}
Surprisingly, since $\Omegatilde = \norm[2]{\cdot}$, the equivalent solution in $\vbetahat$ is exactly that of 
ridgeless regression \cite{hastie2022surprises} 
using the initial tangent kernel features $\nabla_{\vtheta_0} f_{\vtheta_0}(\vx)$---therefore, when no structure is imposed on the adaptive features, the resulting predictions are simply the same as in NTK regression. However, even though the predictions are no different, we already can see a qualitative difference in the description of the system from NTK analysis. Specifically, this adaptive feature perspective reveals how the adapted kernel itself changes: it is a low rank perturbation to the original kernel that directly captures the model output as the label kernel $K_{\widehat{\vy}}$. This kernel alignment effect has been empirically observed in real neural networks that depart from the lazy-training regime \cite{pmlr-v130-baratin21a,atanasov2022neural,pmlr-v162-seleznova22a}.

Moreover, the strength of the kernel alignment is directly related to the difficulty of the regression task as measured by the norm of $\vbetahat$.
Note that $s$ takes minimum value at $\norm[2]{\vbetahat} = 0$ and is increasing otherwise, which means that if the initial tangent features $\nabla_{\vtheta_0} f_{\vtheta_0}(\vx)$ are sufficient to fit $\widehat{\vy}_i$ with a small $\vbetahat$, then $s \approx 1$ and $K(\vx, \vx') \approx K_0(\vx, \vx')$.\footnote{While there is a $\norm[2]{\vbetahat}^{-2}$ factor as well, this is always canceled by the implicit $\norm[2]{\vbetahat}^{2}$ in $K_{\widehat{\vy}}$, such that the overall scale of $K_{\widehat{\vy}}$ is determined entirely by the factor $(s^2 - 1)$.}
It is only when the task is difficult and a larger $\vbetahat$ is required that $s^2 - 1 \gg 0$ and we observe kernel alignment with the label kernel $K_{\widehat{\vy}}$. We illustrate this in an experiment with real neural networks on an MNIST regression task in \Cref{fig:mnist_reg_kernel}.

In order to go beyond ridgeless regression with feature learning, it is necessary to further constrain the structure of $\mM$. By restricting $\mM$ to operate independently on separate subspaces, the result is an effective group sparse penalty over the subspaces. As an extreme example that we detail in \Cref{sec:l1-penalty}, if $\mM$ is constrained to be diagonal, operating independently on $P$ orthogonal subspaces of dimension 1, then the result is that $\Omegatilde$ is a conventional sparsity-inducing penalty. Concretely, we have $\Omegatilde = \norm[1]{\cdot}$ in the special case of $\Omega(\mM) = \norm[F]{\mM}^2$ with a diagonal constraint.

\subsection{Neural network structure}

We now wish to specialize to structures that arise in neural networks.
The developments in this section will be built around parameterization of the network by matrices, but these results could be extended straightforwardly to higher order tensors (such as filter kernels in convolutional neural networks). Generically, 
the parameters of a neural network are $\vtheta = \mathrm{concat}(\mathrm{vec}(\mW_1), \ldots, \mathrm{vec}(\mW_L))$, where $\mW_\ell \in \reals^{P_\ell \times Q_\ell}$ are parameter matrices such that $P = \sum_{\ell=1}^L P_\ell Q_\ell$. These matrices comprise addition and multiplication operations in a directed acyclic graph along with other operations such as nonlinearities and batch or layer normalizations, which do not have parameters and thus do not contribute to the number of tangent features.

We first state our neural network model, and then we explain the motivation of each component.

\begin{figure}[t]
    \centering
    \includegraphics[width=6in]{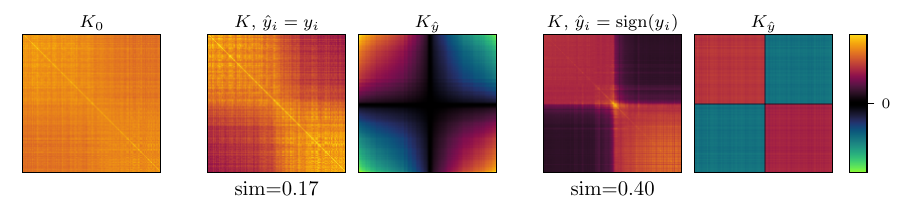}
    \caption{\textbf{A more difficult task yield higher label kernel alignment.} We perform regression using a multi-layer perceptron on 500 MNIST digits from classes 2 and 3. We construct target labels $y_i$ as the best linear fit of binary $\pm 1$ labels using random neural network features. Then we train two networks, one (\textbf{left}) trained to predict $y_i$, and one (\textbf{right}) trained to predict $\mathrm{sign}(y_i)$. We present the adapted kernel and label kernel matrices for data points ordered according to $y_i$ and report the cosine similarity of the adapted kernel and the label kernel. The harder task of regression with binarized labels has a higher label kernel alignment. Further details are given in \Cref{sec:mnist_reg_details}.}
    \label{fig:mnist_reg_kernel}
\end{figure}

\begin{model}
    \label{model:nn}
    The neural network model has the following properties: 
    \begin{enumerate}
        \item The parameter vector $\vtheta - \vtheta_0 \in \reals^P$ consists of $L$ matrices $\mW_1 \in \reals^{P_1 \times Q_1}$ $, \ldots,$ $\mW_L \in \reals^{P_L \times Q_L}$. 
        \item The feature transformation operator $\mM \in \reals^{P \times P}$ is parameterized by $\mM_\ell^{(1)} \in \reals^{P_\ell \times P_\ell}$ and 
        $\mM_\ell^{(2)} \in \reals^{Q_\ell \times Q_\ell}$ 
        for each $\ell \in [L]$
        such that application of $\mM$ to $\vtheta$ results in the mapping $(\mW_\ell)_{\ell=1}^L$ $\mapsto$ $(\mM_\ell^{(1)} \mW_\ell \mM_\ell^{(2)})_{\ell=1}^L$. 
        \item For strictly quasi-convex functions $\omega_\ell^{(1)}$ and $\omega_\ell^{(2)}$ for each $\ell \in [L]$ minimized at $\omega_\ell^{(1)}(1) = 0$ and $\omega_\ell^{(2)}(1) = 0$, 
        the regularizer is given by $\Omega(\mM) = \sum_{\ell=1}^L \Omega_{\omega_\ell^{(1)}}(\mM_\ell^{(1)}) + \Omega_{\omega_\ell^{(2)}}(\mM_\ell^{(2)})$.
        \item The final matrix $\mW_L$ has $Q_L = C$ and a fixed transformation of $\mM_L^{(2)} = \mI_{C}$ (corresponding to $\omega_L^{(2)} = \chi_{\set{1}}$), and there is a mapping $\vz \colon \reals^D \to \reals^{P_\ell}$ such that $\nabla_{\mathrm{vec}(\mW_L)} f_{\vtheta_0}(\vx) = \mI_C \otimes \vz(\vx)^\transp$.
    \end{enumerate}
\end{model}

The first component is simply a reparameterization of $\mW_\ell$ as the difference from initialization, to simplify notation.
The other components are motivated as follows.

\paragraph{Feature transformations.}
Consider a single weight matrix $\mW_\ell$ and the gradient of the $k$-th output $f_{\vtheta}^{(k)}(\vx)$. The matrix structure of $\mW_\ell$ limits the way in which the gradient tends to change. As such, rather than considering a $P_\ell Q_\ell \times P_\ell Q_\ell$ matrix $\mM_\ell$ such that 
$
\overline{\nabla_{\mathrm{vec}(\mW)_\ell} f_{\vtheta}}{}(\vx) = \nabla_{\mathrm{vec}(\mW_\ell)} f_{\vtheta_0}(\vx) \mM_\ell$, it is more natural to consider a Kronecker factorization $\mM_\ell = \mM_\ell^{(2)\transp} \otimes \mM_\ell^{(1)}$ such that 
\begin{align}
    \overline{\nabla_{\mW_\ell} f_{\vtheta}}{}^{(k)}(\vx) = \mM_\ell^{(2)} \nabla_{\mW_\ell} f_{\vtheta_0}^{(k)}(\vx) \mM_\ell^{(1)},
\end{align}
which corresponds to the mapping $\mW_\ell \mapsto \mM_\ell^{(1)} \mW_\ell \mM_\ell^{(2)}$.

\paragraph{Independent optimization.}
We assume that the feature transformations $\mM_\ell^{(1)}$ and $\mM_\ell^{(2)}$ are independently optimized from each other and from the feature transformations corresponding to other weights $\ell' \neq \ell$. The motivation for this assumption comes from the fact that at initialization in wide fully connected neural networks, the gradients at each layer are known to be uncorrelated~\cite{yang2020tensor}. 
Because of independence within each layer,
we also need to define the \textbf{joint penalty} for $v \geq 1$ as
\begin{align}
    (\omega_1 \oplus \omega_2)(v) \defeq \min_{1 \leq z \leq v} \omega_1(z) + \omega_2(\tfrac{v}{z}).
\end{align}
It is straightforward to verify that if $\omega_1$ and $\omega_2$ are strictly quasi-convex such that $\omega_1(1) = \omega_2(1) = 0$, then $\omega_1 \oplus \omega_2$ has the same properties; see \Cref{sec:penaly_proofs}.
We also take this opportunity to define the scalar effective penalty and its induced effective penalty, which will simplify notation:
\begin{align}
    \omegatilde(v) \defeq \min_{z \geq 1} \omega(z) + \frac{v^2}{z^2}
    \quad \text{and} \quad
    \Omegatilde_\omega(\mM) \defeq \Omega_{\omegatilde}(\mM).
\end{align}

\paragraph{Final weight matrix.}
The final operation in most neural networks is linear matrix multiplication by the final weight matrix $\mW_L \in \reals^{P_L \times C}$, such that if $\vz_\vtheta(\vx) \in \reals^{P_L}$ are the penultimate layer features, then the output is a vector $f_{\vtheta}(\vx) = \mW_L^\transp \vz_\theta(\vx) \in \reals^{C}$. 
Noting that
$\vz_\vtheta (\vx)= \mM_L^{(1)} \vz_{\vtheta_0}(\vx)$,
we have the Kronecker formulation
\begin{align}
    f_\vtheta(\vx) = \underbrace{(\mI_C \otimes (\vz_{\vtheta_0}(\vx)^\transp \mM_L^{(1)}))}_{\overline{\nabla_{\mathrm{vec}(\mW_L)} f_\vtheta}(\vx)} 
    \mathrm{vec}(\mW_L).
\end{align}

With the above neural network model in hand,
this brings us to our main result on the solution to the adaptive feature learning problem.
\begin{theorem}
    \label{thm:model:nn}
    There is a solution to \cref{eq:adaptive-kernel-learning} under \Cref{model:nn} such that for each $\ell \in [L]$,
    \begin{align}
        \mMhat_\ell^{(1)} = \mU_\ell \mS_\ell^{(1)} \mU_\ell^\transp, \quad
        \mWhat_\ell = \mU_\ell \mSigma_\ell \mV_\ell^\transp, \quad
        \mMhat_\ell^{(2)} = \mV_\ell \mS_\ell^{(2)} \mV_\ell^\transp, 
    \end{align}
    where $\mU_\ell \in \reals^{P_\ell \times P_\ell}$, $\mV_\ell \in \reals^{Q_\ell \times Q_\ell}$ are orthogonal matrices and $\mS_\ell^{(1)} = \diag{\vs_\ell^{(1)}}$, $\mSigma_\ell = \diag[P_\ell \times Q_\ell]{\vsigma_\ell}$, $\mS_\ell^{(2)} = \diag{\vs_\ell^{(2)}}$ are given by minimizers
    \begin{align}
        [\vs_\ell^{(1)}]_j, [\vsigma_\ell]_j, [\vs_\ell^{(2)}]_j = \argmin_{s_1, s_2 \geq 1, \sigma \geq 0} \omega_\ell^{(1)}(s_1) + \omega_\ell^{(2)}(s_2) + %
        \sigma^2
        \;\; \text{s.t.} \;\;
        s_1 \sigma s_2 = [\vd_\ell]_j
        \;\; \text{for} \; j \leq \min\set{P_\ell, Q_\ell}
    \end{align}
    and $[\vs_\ell^{(1)}]_j = 1$, $[\vs_\ell^{(2)}]_j = 1$ for $j > \min\set{P_\ell, Q_\ell}$,
    such that
    $\mBhat_\ell = \mU_\ell \diag[P_\ell \times Q_\ell]{\vd_\ell} \mV_\ell^\transp$ satisfy
    \begin{align}
        (\mBhat_\ell)_{\ell=1}^L \in \argmin_{\mB_\ell \in \reals^{P_\ell \times Q_\ell}}
        \sum_{\ell=1}^{L} \Omegatilde_{\omega_\ell^{(1)} \oplus \omega_\ell^{(2)}} %
        (\mB_\ell)
        \;\; \text{s.t.} \;\;
        \widehat{\vy}_i - f_{\vtheta_0}(\vx_i) = 
        \sum_{\ell=1}^L \nabla_{\mathrm{vec}(\mW_\ell)} f_{\vtheta_0}(\vx_i) \mathrm{vec}(\mB_\ell)
        \;\forall \; i \in [N].
    \end{align}
    Furthermore, the adapted kernel for this solution is given by
    \begin{align}
        K(\vx, \vx') ={} &K_0(\vx, \vx')
        + \sum_{\ell=1}^L 
        \sum_{j=1}^{\min\set{P_\ell, Q_\ell}}
        \hspace{-1em}
        ([\vs_\ell^{(1)}]_j^2 [\vs_\ell^{(2)}]_j^2 - 1)
        \nabla_{\mathrm{vec}(\mW_\ell)} f_{\vtheta_0}(\vx)
        \mathrm{vec}([\mU_\ell]_{:j} [\mV_\ell]_{:j}^\transp) \\
        & \hspace{12em} \cdot
        \mathrm{vec}([\mU_\ell]_{:j} [\mV_\ell]_{:j}^\transp)^\transp
        \nabla_{\mathrm{vec}(\mW_\ell)} f_{\vtheta_0}(\vx')^\transp.
    \end{align}
\end{theorem}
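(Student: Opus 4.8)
The plan is to \emph{marginalize out} the factors $\mM_\ell^{(1)}, \mW_\ell, \mM_\ell^{(2)}$, reducing \cref{eq:adaptive-kernel-learning} to a fixed-feature problem in the products $\mB_\ell \defeq \mM_\ell^{(1)} \mW_\ell \mM_\ell^{(2)}$. By the Kronecker identity $\mathrm{vec}(\mA\mX\mD) = (\mD^\transp \otimes \mA)\,\mathrm{vec}(\mX)$ and part~2 of \Cref{model:nn}, applying $\mM$ to $\vtheta - \vtheta_0$ replaces each block $\mathrm{vec}(\mW_\ell)$ by $\mathrm{vec}(\mB_\ell)$, so the interpolation constraint in \cref{eq:adaptive-kernel-learning} depends on the variables only through $(\mB_\ell)_{\ell=1}^L$, while the objective separates as $\sum_{\ell=1}^L \bigl( \Omega_{\omega_\ell^{(1)}}(\mM_\ell^{(1)}) + \Omega_{\omega_\ell^{(2)}}(\mM_\ell^{(2)}) + \norm[F]{\mW_\ell}^2 \bigr)$. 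Hence \cref{eq:adaptive-kernel-learning} has the same value as minimizing $\sum_{\ell=1}^L g_\ell(\mB_\ell)$ over $(\mB_\ell)$ subject to the interpolation constraint, where
\begin{align*}
    g_\ell(\mB) \defeq \min \set{ \Omega_{\omega_\ell^{(1)}}(\mM^{(1)}) + \Omega_{\omega_\ell^{(2)}}(\mM^{(2)}) + \norm[F]{\mW}^2 \,:\, \mM^{(1)} \mW \mM^{(2)} = \mB }.
\end{align*}
The last layer, where $\mM_L^{(2)} = \mI_C$ (equivalently $\omega_L^{(2)} = \chi_{\set{1}}$), is the degenerate instance of this, and part~4 of \Cref{model:nn} ensures the Kronecker form of $\nabla_{\mathrm{vec}(\mW_L)} f_{\vtheta_0}$ is compatible with keeping only a left transformation.

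The heart of the argument is a single-layer factorization lemma identifying $g_\ell$ with $\Omegatilde_{\omega_\ell^{(1)} \oplus \omega_\ell^{(2)}}$ and exhibiting the claimed shared-SVD minimizer. I would prove it in four steps. (i) As in the footnote after \cref{eq:omega}, replace the SVD factors of $\mM^{(1)}, \mM^{(2)}$ by their symmetric positive semidefinite parts and absorb the residual rotations into $\mW$; this leaves $\norm[F]{\mW}$, the two spectral penalties, and the product unchanged, so without loss of generality $\mM^{(1)}, \mM^{(2)}$ are symmetric positive semidefinite and $\mW = (\mM^{(1)})^{-1} \mB (\mM^{(2)})^{-1}$. (ii) Without loss of generality every singular value of $\mM^{(1)}, \mM^{(2)}$ is at least $1$: writing $\norm[F]{(\mM^{(1)})^{-1} \mC}^2 = \sum_i \sigma_i(\mM^{(1)})^{-2} \norm[2]{\vu_i^\transp \mC}^2$ in the eigenbasis $\set{\vu_i}$ of $\mM^{(1)}$, raising any singular value below $1$ up to $1$ strictly decreases $\omega(\cdot)$ (strict quasi-convexity with minimum at $1$) and does not increase $\norm[F]{\mW}^2$. (iii) For fixed singular values, minimize over the singular vectors: using $\norm[F]{\mW}^2 = \tr\bigl( (\mM^{(1)})^{-2} \mB (\mM^{(2)})^{-2} \mB^\transp \bigr)$, a von Neumann trace / rearrangement argument over the two orthogonal groups shows the minimum --- also over the relative ordering of the spectra --- is attained when $\mM^{(1)}$ and $\mM^{(2)}$ carry, respectively, the left and right singular vectors of $\mB$, consistently sorted, with value $\sum_j \sigma_j(\mB)^2 \bigl( \sigma_j(\mM^{(1)}) \sigma_j(\mM^{(2)}) \bigr)^{-2}$. (iv) The objective then becomes the separable sum $\sum_j \bigl( \omega_\ell^{(1)}(s_1) + \omega_\ell^{(2)}(s_2) + \sigma_j(\mB)^2 (s_1 s_2)^{-2} \bigr)$; minimizing each summand over $s_1, s_2 \ge 1$ --- equivalently over $s_1, s_2 \ge 1$, $\sigma \ge 0$ with $s_1 \sigma s_2 = \sigma_j(\mB)$, where $\sigma$ is the induced singular value of $\mWhat$ --- is exactly the scalar program in the theorem (with $\sigma_j(\mB)$ playing the role of $[\vd_\ell]_j$), and summing its values over $j$ gives $\Omegatilde_{\omega_\ell^{(1)} \oplus \omega_\ell^{(2)}}(\mB)$ by the definitions of the joint penalty $\oplus$ and the scalar effective penalty $\omegatilde$.

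Finally I would assemble the pieces. Substituting $g_\ell = \Omegatilde_{\omega_\ell^{(1)} \oplus \omega_\ell^{(2)}}$ into the marginalized problem reproduces verbatim the equivalent optimization over $(\mB_\ell)$ stated in the theorem; a minimizer $(\mBhat_\ell)$ of it then produces, via step~(iv) applied to $\mB = \mBhat_\ell$, the aligned triple $(\mMhat_\ell^{(1)}, \mWhat_\ell, \mMhat_\ell^{(2)})$ with common singular vectors $\mU_\ell, \mV_\ell$ and the stated singular values, $\vd_\ell$ being the singular values of $\mBhat_\ell$. For the adapted kernel, $K(\vx, \vx') = \nabla_{\vtheta_0} f_{\vtheta_0}(\vx)\, \mMhat \mMhat^\transp\, \nabla_{\vtheta_0} f_{\vtheta_0}(\vx')^\transp$, and $\mMhat$ is block diagonal across layers with block $\ell$ equal to $\mMhat_\ell^{(2)\transp} \otimes \mMhat_\ell^{(1)}$, so $\mMhat \mMhat^\transp$ has block $\ell$ equal to $(\mMhat_\ell^{(2)})^2 \otimes (\mMhat_\ell^{(1)})^2$; substituting the eigendecompositions $\mMhat_\ell^{(1)} = \mU_\ell \mS_\ell^{(1)} \mU_\ell^\transp$ and $\mMhat_\ell^{(2)} = \mV_\ell \mS_\ell^{(2)} \mV_\ell^\transp$, subtracting the $\mMhat = \mI$ value $K_0$, and using $\mathrm{vec}([\mU_\ell]_{:j}[\mV_\ell]_{:j}^\transp) = [\mV_\ell]_{:j} \otimes [\mU_\ell]_{:j}$ collects the surviving terms into the claimed kernel expression. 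The hard part will be step~(iii): unlike the classical von Neumann inequality one is here \emph{minimizing} a Frobenius norm jointly over two independent orthogonal groups with prescribed singular values, and one must moreover verify that the consistently sorted pairing of the three spectra is optimal --- this is precisely the ``optimization of matrix products with rotationally invariant penalties'' ingredient the paper highlights.
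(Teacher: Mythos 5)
Your plan follows essentially the same route as the paper: it reconstructs the paper's key per-layer factorization lemma (symmetrize $\mM_\ell^{(1)},\mM_\ell^{(2)}$ as in the footnote, align their eigenvectors with the singular vectors of $\mB_\ell$ via a rearrangement/von Neumann argument, and reduce to the separable scalar program that defines $\omega_\ell^{(1)}\oplus\omega_\ell^{(2)}$ and $\omegatilde$), applies it layer by layer with the final layer handled through $\omega_L^{(2)}=\chi_{\set{1}}$, and thereby arrives at the stated equivalent optimization over $(\mB_\ell)_{\ell=1}^L$. The only differences are cosmetic: you marginalize the layers jointly where the paper fixes the other layers and conditions the constraint set, and you spell out the Kronecker bookkeeping for the adapted kernel, which the paper's own proof leaves implicit.
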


The proof approach is similar to \Cref{thm:structureless}.
In other words, we have reduced the bilinearly constrained optimization over $\mM$ and $\vtheta$ to a linearly constrained optimization over $(\mB_\ell)_{\ell=1}^L$, just as we did in the unstructured case. However, this time, due to the matrix stucture of $\mW_\ell$ and corresponding structure in $\mM_\ell^{(1)}$ and $\mM_\ell^{(2)}$, the new equivalent optimization is much richer than simple minimum $\ell_2$ norm interpolation using tangent features.
To better understand the regularization in this new problem, we have the following result about the scalar effective penalty.

\begin{proposition}
    \label{prop:omega-sub-suqadratic}
    Let $\omega$ be a continuous strictly quasi-convex function minimized at $\omega(1) = 0$. Then $v^2 \mapsto \tilde{w}(v)$ is an increasing concave function, and
    $\omegatilde(v) = v^2 + o(v^2)$ as $v \to 0$.
\end{proposition}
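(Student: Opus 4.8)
The plan is to set $u = v^2$ and study the function $g(u) \defeq \omegatilde(\sqrt{u}) = \inf_{z \geq 1}\left(\omega(z) + u/z^2\right)$ for $u \geq 0$, so that the three claims become: $g$ is finite and nondecreasing, $g$ is concave, and $g(u) = u + o(u)$ as $u \to 0^{+}$. The only structural facts about $\omega$ that I would use are the elementary consequences of continuity and strict quasi-convexity with $\omega(1) = 0$: the minimizer $1$ is unique, $\omega$ is strictly increasing on $[1,\infty)$, and hence $\omega(z) \geq \omega(1+\epsilon) > 0$ for every $\epsilon > 0$ and every $z \geq 1+\epsilon$; in particular $L \defeq \lim_{z\to\infty}\omega(z)$ lies in $(0,\infty]$.

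Finiteness, monotonicity, and concavity are essentially formal. Taking $z = 1$ in the infimum gives $0 \leq g(u) \leq \omega(1) + u = u < \infty$. For each fixed $z \geq 1$ the map $u \mapsto \omega(z) + u/z^2$ is affine in $u$ with positive slope $1/z^2$, so the pointwise infimum $g$ is nondecreasing; it is moreover strictly increasing on any interval where the infimum is attained at a finite $z$ (e.g.\ whenever $\omega$ is coercive), while if $\omega$ is bounded $g$ can plateau at $L$, which still leaves it nondecreasing throughout. Concavity is the standard fact that a finite pointwise infimum of affine functions is concave: for $\lambda \in [0,1]$ and $u = \lambda u_1 + (1-\lambda) u_2$, writing the objective as $\lambda\left(\omega(z) + u_1/z^2\right) + (1-\lambda)\left(\omega(z) + u_2/z^2\right)$ and using superadditivity of the infimum yields $g(u) \geq \lambda g(u_1) + (1-\lambda) g(u_2)$.

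For the asymptotic expansion the bound $g(u) \leq u$ is already in hand, so it remains to show $\liminf_{u \to 0^{+}} g(u)/u \geq 1$. Fix $u < L$. The objective $\omega(z) + u/z^2$ is continuous on $[1,\infty)$, equals $u$ at $z = 1$, and tends to $L > u$ as $z \to \infty$, so its infimum over $[1,\infty)$ agrees with its infimum over a compact subinterval and is attained at some $z_u \geq 1$ with $\omega(z_u) + u/z_u^2 = g(u) \leq u$. Since $\omega(z_u) \leq g(u) \leq u \to 0$, the facts about $\omega$ above force $z_u \to 1$, hence $u/z_u^2 = u(1 + o(1))$, and so $g(u) \geq u/z_u^2 = u(1 + o(1))$. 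Together with $g(u) \leq u$ this gives $g(u)/u \to 1$, i.e.\ $\omegatilde(v) = v^2 + o(v^2)$ as $v \to 0$.

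The main obstacle is the lower bound in the asymptotics — specifically the claim that a minimizer $z_u$ of $z \mapsto \omega(z) + u/z^2$ over $[1,\infty)$ tends to $1$ as $u \to 0^{+}$, which is exactly where strict quasi-convexity enters: a minimizer whose objective value vanishes cannot stay bounded away from $1$, because $\omega(z) > 0$ there. Monotonicity and concavity are routine; the only real bookkeeping subtlety is the gap between ``nondecreasing'' and ``strictly increasing,'' which one settles by noting that $g$ is strictly increasing exactly on $\{u : g(u) < L\}$ and constant (equal to $L$) beyond it.
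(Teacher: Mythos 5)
Your proof is correct, and for the asymptotic claim it takes a genuinely different (and more robust) route than the paper. The monotonicity and concavity steps coincide in substance with the paper's: the paper gets concavity by viewing $v^2 \mapsto -\omegatilde(v)$ as a Legendre--Fenchel conjugate, which is the same underlying fact as your ``infimum of affine functions of $u=v^2$ is concave''; your extra care about the case of bounded $\omega$, where the infimum may not be attained and $g$ plateaus at $L=\lim_{z\to\infty}\omega(z)$, is actually a point the paper glosses over when it asserts strict increase ``since the minimizer must be finite.'' The real divergence is in proving $\omegatilde(v)=v^2+o(v^2)$: the paper differentiates the value function through the stationarity condition $\partial\omega(z^*) \ni 2v^2/z^{*3}$ and justifies $v\,\partial z^*/\partial v \to 0$ by a case split assuming either non-differentiability of $\omega$ at $1$ or local power-law behavior $\omega(z)=|z-1|^p+o(|z-1|^p)$ with $p\ge 2$ --- an assumption not implied by continuity and strict quasi-convexity (e.g.\ exponents $p\in(1,2)$ or non-power local behavior are not covered). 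You instead sandwich $g(u)$ between $u/z_u^2$ and $u$ and show directly that any minimizer $z_u$ tends to $1$ because $\omega(z_u)\le g(u)\le u\to 0$ while $\omega$ is bounded away from $0$ outside any neighborhood of $1$. This uses only the stated hypotheses, needs no envelope-theorem differentiability, and covers all cases uniformly; what it gives up is only the derivative-level information ($\omegatilde'(0)=0$, $\omegatilde''(0)=2$) that the paper's computation produces but that the proposition does not actually require.
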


That is, no matter the original penalties $\omega_\ell^{(1)}$ and $\omega_\ell^{(2)}$, 
the resulting $\Omegatilde_{\omega_\ell^{(1)} \oplus \omega_\ell^{(2)}}$ will always be a spectral penalty with 1) sub-quadratic tail behavior, and 2) quadratic behavior for small values. We illustrate this for a few examples in \Cref{fig:penalties}. For one example, when $\omega(v) = (v - 1)^2$, the effective penalty $\omegatilde(v)$ behaves like $|v|$ for large $v$, making $\Omegatilde_\omega(\mB)$ like the nuclear norm for large singular values and like the Frobenius norm for small singular values. In general, $\omegatilde$ has slower tails than $\omega$. This behavior is highly related of the equivalence of the nuclear norm as the sum of Frobenius norms of two factors~\cite{srebro2004factorization}, which coincides with the case $\omega(v) = v^2$; however, since we constrain the $\mM$ to be near $\mI_P$, we retain Frobenius norm behavior near 0. The sub-quadratic nature of $\omegatilde$ is a straightforward consequence of Legendre--Fenchel conjugacy.

The result of this effective regularization is a model that is able to leverage structures through the group approximate low-rank penalty while also being robust to noise and model misspecification through the Frobenius norm penalty for small singular values.
From this perspective, we can conceptually consider a decomposition of the matrices $\mB_\ell \approx \mB_\ell^{\mathrm{LO}} + \mB_\ell^{\mathrm{NTK}}$, where $\mB_\ell^{\mathrm{LO}}$ are low rank and capture the structure learned from the data that is predictive of the training labels, while $\mB_\ell^{\mathrm{NTK}}$ form the component that fits the residual (after regression using $\mB_\ell^{\mathrm{LO}}$) via standard NTK interpolation with Frobenius norm minimization. In this way, the adaptiveness of the neural network is able to get the benefits of both strategies.

\begin{figure}[t]
    \centering
    \begin{minipage}{0.3\textwidth}
    \centering
    \includegraphics[height=1.8in]{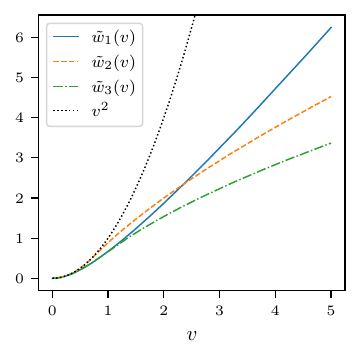}
    \caption{\textbf{Effective penalties are sub-quadrataic.} We plot effective penalties for $\omega_1(v) = (v - 1)^2$, $\omega_2(v) = |v - 1|$, and $\omega_3 = \omega_1 \oplus \omega_2$. All are sub-quadratic, yet all behave like $v^2$ near $v = 0$.}
    \label{fig:penalties}
    \end{minipage}\hfill
    \begin{minipage}{0.65\textwidth}
    \centering
    \includegraphics[height=1.8in]{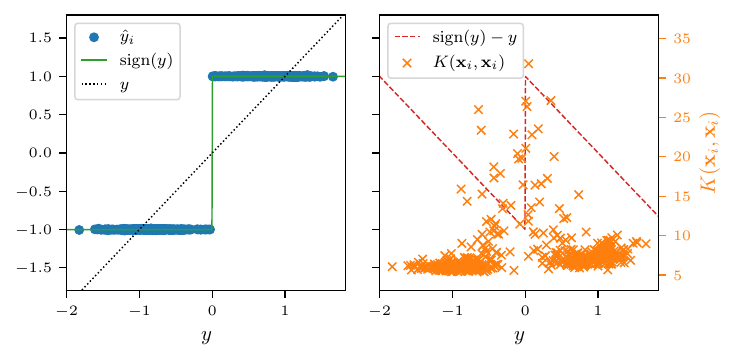}
    \caption{\textbf{Adapted kernel reveals difficult structure.} For the neural network from \Cref{fig:mnist_reg_kernel} trained on binarized labels $\mathrm{sign}(y_i)$ (\textbf{left}), the target function (green, solid) is difficult while the function $\vx \mapsto y$ (black, dotted) is easily predicted using tangent features. The network must learn (\textbf{right}) to fit the residual (red, dashed), which results in the kernel (orange, $\times$) being highly influenced by difficult training points (near $y = 0$).}
    \label{fig:mnist_reg}
    \end{minipage}        
\end{figure}

To illustrate how we can translate the insights from this model to real neural networks, we revisit the experiment from \Cref{fig:mnist_reg_kernel}. In this experiment we fit a neural network to predict two different sets of target values: first, we construct $y_i = \vz(\vx_i)^\transp \vb^*$, where $\vb^*$ is chosen to provide the best linear fit of binary $\pm 1$ labels distinguishing between two classes of digits from the MNIST dataset using an independently initialized network of the same architecture. Therefore, $y_i$ are well represented by the initial tangent features, and by the final layer features $\vz(\vx)$ in particular. Second, we use the binarized labels $\mathrm{sign}(y_i)$ (illustrated in \Cref{fig:mnist_reg} (left)), which are not representable using only $\vz(\vx)$ and therefore requires using the tangent features available at prior layers.

For the linear targets $y_i$, since only final layer features are required, the solution should have large values only in the final layer weights $\widehat{\mB}_L$, which for this problem are simply a vector $\vbetahat_L$. Thus, we should expect only the final layer to contribute to the change in the adapted kernel:
\begin{align}
    K(\vx, \vx') \approx K_0(\vx, \vx') +
    ([\vs_L^{(1)}]_1^2 - 1) \norm[2]{\vbetahat_L}^{-2} (\vz(\vx)^\transp \vbetahat_L) (\vz(\vx')^\transp \vbetahat_L).
\end{align}
This final layer contribution should be very close to $K_{\widehat{\vy}}(\vx, \vx')$, the label kernel.
We can see that our theoretical adaptive feature model reflects real neural networks in \Cref{fig:mnist_reg_kernel} (left), where the adapted kernel very closely resembles a linear combination of the initial NTK and the label kernel.

For the binarized targets $\mathrm{sign}(y_i)$, which are poorly represented by the initial NTK features, the solution should require larger values of $\widehat{\mB}_\ell$, which will result in more change to the adapted kernel. To understand where these large values will be allocated, consider that the linear targets $y_i$ are essentially the best linear fit of the binarized labels $\mathrm{sign}(y_i)$, so the final layer contribution to the prediction should be proportional to $y_i$. Thus, the remainder of the weights only fit the residual $\mathrm{sign}(y_i) - \alpha y_i$, which is largest for data points with $y_i$ near $0$. The solution should allocate principal components that align with the tangent features at earlier layers of these data points, which function as ``support vectors'' of the solution, and the components should be larger as $y_i$ are nearer to $0$. Indeed, this is exactly what we see in the real neural network in \Cref{fig:mnist_reg_kernel,fig:mnist_reg} (right): the adapted kernel reflects the initial NTK, the label kernel for linear targets, and the label kernel for binary targets, but it is also very large for data points having $y_i$ near 0, where features aligning with points with large residuals must be amplified.

\section{Discussion}
\label{sec:discussion}

Although our framework is far from capturing all of the complexities of neural networks, we have been able to shed some light on kernel alignment to close our gap in understanding. We now discuss a few possibilities for future work and connections to other closely related literature.

\paragraph{Average features vs.\ final features.}
In our analysis we considered the average features over a linear path in parameter space, but this is difficult to work with in practice, compared to for example the features at the end of training. Of course, we can equivalently write the average features as an average transformation by $\mM_{\vtheta'}$ such that $\nabla_{\vtheta'} f_{\vtheta'} (\vx) = \nabla_{\vtheta_0} f_{\vtheta_0}(\vx) \mM_{\vtheta'}$ along the path from $\vtheta_0$ to $\vtheta$: 
\begin{align}
    \overline{\nabla f_{\vtheta}}(\vx)
    = \nabla_{\vtheta_0} f_{\vtheta_0}(\vx) \mM
    = \nabla_{\vtheta_0} f_{\vtheta_0}(\vx) \Big( \mI_P + 
    \int_0^1 (\mM_{\vtheta'} - \mI_P) \big|_{\vtheta' = (1 - t)\vtheta_0 + t \vtheta} dt 
    \Big).
\end{align}
In general, an integral of matrices, even if individually low rank, should not be a low rank matrix. However, the average $\mM$ is often a low-rank perturbation to $\mI_P$, which is a remarkable coincidence unless all $\mM_{\vtheta'}$ along this path have the same principal subspace as $\mM$ but with different eigenvalues. We thus expect the average features and final features to be similar, but not necessarily the same (in general, due to the averaging effect, the final tangent features will be larger for example). This difference can be important in downstream analysis---for example, in transfer learning, we would let $\vtheta_0$ be the solution to a previous optimization problem, and the initial tangent features would be the final tangent features of that optimization, rather than the average tangent features. Thus another direction for further study is the extent to which average features and final features coincide. In our limited (unpublished) observations, the final feature kernel closely resembles a re-scaled version of the average feature kernel, which coincides with recent results for linear networks~\cite{atanasov2022neural}.

\paragraph{Benign overfitting.}
Neural networks have shown remarkable resilience to noisy labels, sparking the recent theoretical research area of studying models that interpolate noisy labels yet still generalize well~\cite{zhang2017understanding,pmlr-v80-belkin18a,liang2020interpolate,bartlett2020benign}. Much research in this area has been concerned with finding fixed feature regimes in which this ``benign overfitting'' can occur in ridge(less) regression settings, but \citet{pmlr-v162-donhauser22a} have shown that when the ground truth function is sparsely represented by the features, remarkably, the optimal $\ell_p$ penalty is not $p \in \set{1, 2}$, but rather in between, since some resemblance to sparsity-inducing $p = 1$ encourages learning structure, but something like $p = 2$ is necessary to absorb noise without harming prediction (for more discussion regarding the latter point, see~\cite{bartlett2020benign}). Our results reflect these desired optimal properties precisely, since the effective penalties always have sub-quadratic tail behavior (promoting structure) and quadratic behavior near zero (absorbing noise). This raises another future research question, regarding the optimality of these effective penalties compared to $\ell_p,\, p \in (1, 2)$ penalties.

\paragraph{Low rank optimization.}
Our framework sheds interesting insight on the success of low rank deviation optimizations in neural networks, which have been used to characterize task difficulty and compress networks~\cite{li2018measuring} and recently to efficiently fine-tune pretrained large language models in the low rank adaptation (LoRA) method~\cite{hu2022lora}.
In LoRA, for example, each weight matrix is parameterized as $\mW_\ell = \mU_\ell \mV_\ell^\transp$ for $\mU_\ell \in \reals^{P_\ell \times R_\ell}$ and $\mV_\ell \in \reals^{Q_\ell \times R_\ell}$, where $R_\ell \ll P_\ell, Q_\ell$. Due to the sub-quadratic spectral regularization of the adaptive feature optimization, if the target task is sufficiently related to the source task, such that the target function is well represented by the tangent features at $\vtheta_0$ of the pretrained model, then according to \Cref{thm:model:nn}, the model is inclined to learn an approximately low-rank deviation from the initial parameters even if the $\mW_\ell$ were not explicitly constrained to be low rank. By constraining the weights to be low rank by design, LoRA can essentially recover the same solution, if not an even more aggressively structured one, at a fraction of the computational time and memory cost. An interesting question for future work is how close the solution using LoRA's hard low rank constraint is to the solution under the soft low-rank constraint of adaptive feature learning, and which of these solutions yields a better predictor.

\paragraph{Effect of depth.}
We speculate that the value of depth in a network is in providing a very rich set of late layer features from which a few meaningful principal components can be extracted in feature learning. The richer these features are, the easier it will be to fit a parsimonious model using only a few low rank components.

\section*{Acknowledgements}

The experiments in this paper were written in PyTorch \cite{paszke2017automatic}
with code assistance from GitHub Copilot.
DL was supported by ARO grant 2003514594. SA and computing resources were supported by Richard G.\ Baraniuk via NSF grants CCF-1911094, IIS-1838177, and IIS-1730574; ONR grants N00014-18-1-2571, N00014-20-1-2534, and MURI N00014-20-1-2787; AFOSR grant FA9550-22-1-0060; and a Vannevar Bush Faculty Fellowship, ONR grant N00014-18-1-2047.

\appendix

\section{Theoretical technical details}
\label{sec:proofs}

\subsection{Diagonal $\mM$ example}
\label{sec:l1-penalty}
In the case that $\mM$ is diagonal, the problem immediately separates into $P$ scalar optimizations of $\omega(m_j) + \theta_j^2 = \omega(m_j) + \tfrac{\beta_j^2}{m_j^2}$. Thus the effective penalty is a sum over these $P$ scalars:
\begin{align}
    \Omegatilde(\vbeta) = \sum_{j=1}^P \omegatilde(\beta_j).
\end{align}
Since $\omegatilde$ is subquadratic, this is roughly a conventional sparsity inducing penalty, except since the behavior is quadratic near 0, it does not promote exact sparsity.
If, however, $\omega(m) = m^2$ (which does not meet the requirements that we use elsewhere in this work, as it is not minimized at $\omega(1) = 0$), then the minimizer of each penalty is given by $m_j^2 = |\beta_j|$, resulting in an effective penalty of
\begin{align}
    \Omegatilde(\vbeta) = \sum_{j=1}^P 2 |\beta_j|
    = 2 \norm[1]{\vbeta}.
\end{align}

\subsection{Penalty function results}
\label{sec:penaly_proofs}

\subsubsection{Joint penalty quasi-convexity}
\label{sec:joint_penalty}

\begin{proposition}
    \label{prop:omega-oplus}
    If $\omega_1$ and $\omega_2$ are strictly quasi-convex functions minimized at $\omega_1(1) = 0$ and $\omega_2(1) = 0$, then $\omega_1 \oplus \omega_2$ is a strictly increasing function defined on $[1, \infty)$ such that $(\omega_1 \oplus \omega_2)(1) = 0$.
\end{proposition}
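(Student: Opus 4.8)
The plan is to extract from the hypotheses the single structural fact that drives everything, dispose of the two easy assertions, and then prove strict monotonicity by conditioning on the optimal split point of the \emph{larger} argument.

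First I would observe that a strictly quasi-convex $\omega$ whose minimum value $0$ is attained at $1$ is strictly increasing on $[1,\infty)$ and nonnegative everywhere: for $1 \le a < b$, the point $a$ is a convex combination of $1$ and $b$, and strict quasi-convexity together with $\omega(1) = 0 \le \omega(b)$ forces $\omega(a) < \omega(b)$ (the degenerate case $a=1$ being handled by noting a second minimizer would itself contradict strict quasi-convexity). This is the only property of $\omega_1,\omega_2$ the proof uses, and it suffices because in $(\omega_1 \oplus \omega_2)(v) = \min_{1 \le z \le v} \omega_1(z) + \omega_2(v/z)$ both $z$ and $v/z$ range over $[1,v] \subseteq [1,\infty)$. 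The value at $v=1$ is immediate since the feasible set is $\{1\}$, giving $\omega_1(1) + \omega_2(1) = 0$; and for $v > 1$ the objective $z \mapsto \omega_1(z)+\omega_2(v/z)$ is continuous on the compact interval $[1,v]$, so the minimum is attained and lies in $[0,\omega_2(v)]$ (the lower bound by nonnegativity, the upper bound by taking $z=1$), so $\omega_1\oplus\omega_2$ is finite and well-defined on $[1,\infty)$.

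For strict monotonicity, fix $1 \le v < v'$ and let $z^\star \in [1,v']$ attain $(\omega_1\oplus\omega_2)(v') = \omega_1(z^\star) + \omega_2(v'/z^\star)$; I split on whether $z^\star \le v$. If $z^\star \le v$, then $z^\star$ is feasible for the $v$-problem and $1 \le v/z^\star < v'/z^\star$, so strict monotonicity of $\omega_2$ gives $(\omega_1\oplus\omega_2)(v) \le \omega_1(z^\star) + \omega_2(v/z^\star) < \omega_1(z^\star) + \omega_2(v'/z^\star) = (\omega_1\oplus\omega_2)(v')$. If instead $z^\star > v$, then $z^\star > v \ge 1$, so strict monotonicity of $\omega_1$ gives $\omega_1(z^\star) > \omega_1(v)$, while taking $z=v$ in the $v$-problem yields $(\omega_1\oplus\omega_2)(v) \le \omega_1(v) + \omega_2(1) = \omega_1(v)$; combined with $\omega_2(v'/z^\star) \ge 0$ this gives $(\omega_1\oplus\omega_2)(v) \le \omega_1(v) < \omega_1(z^\star) \le (\omega_1\oplus\omega_2)(v')$. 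In both cases $(\omega_1\oplus\omega_2)(v) < (\omega_1\oplus\omega_2)(v')$.

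There is no serious obstacle here: the arithmetic is routine, and the only real design choice is to case-split on the position of the minimizer $z^\star$ of the larger argument $v'$ (rather than of $v$), which is exactly what makes both chains of inequalities point in the right direction. The one technical point worth stating is the attainment of the minimum in the definition of $\omega_1\oplus\omega_2$, which rests on compactness of $[1,v]$ and continuity of the objective, continuity being part of the running assumptions as also used in \Cref{prop:omega-sub-suqadratic}.
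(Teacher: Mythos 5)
Your proof is correct and follows the same core strategy as the paper's: reduce everything to the fact that $\omega_1$ and $\omega_2$ are strictly increasing on $[1,\infty)$, dispose of $(\omega_1\oplus\omega_2)(1)=0$ trivially, and establish strict monotonicity by comparing the two minimization problems. Your case split on whether the optimal $z^\star$ for the larger argument lies in $[1,v]$ is in fact a slightly more careful rendering of the step the paper compresses into its chain of inequalities (which tacitly needs the minimizer of the intermediate problem to lie in $[1,v_1]$, since the pointwise comparison $\omega_2(v_2/z)>\omega_2(v_1/z)$ can fail for $z>v_1$), and you additionally spell out why strict quasi-convexity with minimum at $1$ gives strict monotonicity on $[1,\infty)$, which the paper simply asserts.
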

\begin{proof}
    Since we only evaluate $\omega_1$ and $\omega_2$ on $[1, \infty)$, the only important property is that they are both strictly increasing. It is clear that $(\omega_1 \oplus \omega_2)(1) = \omega_1(1) + \omega_2(1) = 0$. We need only show that the function is increasing. 

    First, we argue that the function $z \mapsto \omega_1(z) + \omega_2(\tfrac{v}{z})$ defined on $(0, \infty)$ for $v \geq 1$ always takes minimum value for $z \in [1, v]$. It can never be minimized for $z < 1$, since in that case $\omega_1(z) > \omega_1(1)$ and $\omega_2(\tfrac{v}{z}) > \omega_2(v)$. By symmetry, it can also not be minimized for $z > v$. Now let $1 \leq v_1 < v_2$. Then
    \begin{align}
        (\omega_1 \oplus \omega_2)(v_2) 
        > \min_{1 \leq z \leq v_2} \omega_1(z) + \omega_2(\tfrac{v_1}{z})
        = \min_{1 \leq z \leq v_1} \omega_1(z) + \omega_2(\tfrac{v_1}{z}) 
        = (\omega_1 \oplus \omega_2)(v_1),
    \end{align}
    and therefore $\omega_1 \oplus \omega_2$ is strictly increasing on $[1, \infty)$.
\end{proof}

\subsubsection{Proof of \texorpdfstring{\Cref{prop:omega-sub-suqadratic}}{Proposition 3}}

\begin{proof}
    To see that $v^2 \mapsto \omegatilde(v)$ is concave, note that $v^2 \mapsto -\omegatilde(v)$ is the Legendre--Fenchel conjugate of the function $z^2 \mapsto \omega(\tfrac{1}{z^2})$, which is therefore convex. The proof that $\omegatilde$ is increasing is similar to the proof of \Cref{prop:omega-oplus}: the optimal $z^*$ cannot be less than $1$, and note that for all $z \geq 1$, if $v_1 < v_2$, then $\tfrac{v_1^2}{z^2} < \tfrac{v_2^2}{z^2}$, and therefore $\omegatilde(v_1) < \omegatilde(v_2)$, since the minimizer must be finite. Lastly, we obtain the second-order Taylor series expansion about $v = 0$. Note that at $v = 0$, $z^* = 1$. Then
    \begin{align}
        \partial \omega(z^*) - 2\tfrac{v^2}{z^{*3}} \ni 0
        &\implies \frac{\partial \omegatilde(v)}{\partial v} \restricted[\Big]{v = 0}
        = 2\frac{v}{z^{*2}} \restricted[\Big]{v = 0}
        = 0 \\
        \text{and} \;\; &\implies 
        \frac{\partial^2 \omegatilde(v)}{\partial v^2} \restricted[\Big]{v = 0}
        = 2\frac{1}{z^{*2}} - 4 \frac{v}{z^{*3}} \frac{\partial z^*}{\partial v} \restricted[\Big]{v = 0}
        = 2.
    \end{align}
    To justify the latter, consider two cases: if $\omega$ is non-differentiable at $z=1$, then for sufficiently small $v$, $z^* = 1$ must be constant as $2 v^2 \in \partial \omega(z^*)$; otherwise, for some $p \geq 2$, $\omega(z) = |z - 1|^p + o(|z-1|^p)$. In the second case, for small $v$, this means that $p |z^* - 1|^{p-1} \approx 2 v^2$ and this approximation becomes exact as $v \to 0^+$, so taking the limit of the derivative and plugging in $p |z^* - 1|^{p-1} = 2 v^2$,
    \begin{align}
        p (p - 1) |z^* - 1|^{p-2} \frac{\partial z^*}{\partial v} 
        = 4 v
        \implies
        \frac{\partial z^*}{\partial v} = C_p v^{1 - \frac{2(p - 2)}{p - 1}}.
    \end{align}
    This implies that $v \frac{\partial z^*}{\partial v} = C_p v^{\tfrac{2}{p - 1}} \to 0$ as $v \to 0^+$.
    As a result, we have the Taylor expansion $\omegatilde(v) = 0 + (0) v + \tfrac{(2)}{2} v^2 + o(v^2)$, as stated.
\end{proof}

\subsection{Proof of main results}

To prove the main theorems, we first prove the following lemma.

\begin{lemma}
    \label{lemma:equivalent_opt}
    Given strictly quasi-convex $\omega_1$ and $\omega_2$ minimized at $\omega_1(1) = 0$ and $\omega_2(1) = 0$ and a constraint set $\setB \subseteq \reals^{P \times Q}$, there is a solution
    \begin{align}
        \mMhat_1,\, \mMhat_2, \mWhat \in \argmin_{\substack{\mM_1 \in \reals^{P \times P}, \\ \mM_2 \in \reals^{Q \times Q}, \\ \mW \in \reals^{P \times Q}}} \, 
        \Omega_{\omega_1}(\mM_1) + \Omega_{\omega_2}(\mM_2) + \norm[F]{\mW}^2
        \;\; \text{s.t.} \;\;
        \mM_1 \mW \mM_2 \in \setB
    \end{align}
    having the form $\mMhat_1 = \mU \mS_1 \mU^\transp$, $\mMhat_2 = \mV \mS_2 \mV^\transp$, $\mWhat = \mU \mSigma \mV^\transp$, where $\mS_1 = \diag{\vs_1}$, $\mS_2 = \diag{\vs_2}$, $\mSigma = \diag[P \times Q]{\vsigma}$, and $\mD = \diag[P \times Q]{\vd}$ such that for $j \leq \min\set{P, Q}$
    \begin{align}
        \label{eq:lemma_proof_separable}
        [\vs_1]_j, [\vs_2]_j, [\vsigma]_j = \argmin_{s_1, s_2 \geq 1, \sigma \geq 0} \omega_1(s_1) + \omega_2(s_2) + \sigma^2
        \;\; \text{s.t.} \;\;
        s_1 s_2 \sigma = [\vd]_j
    \end{align}
    and $[\vs_1]_j = 1$, $[\vs_2]_j = 1$ for $j > \min\set{P, Q}$, and
    \begin{align}
        \mBhat = \mU \mD \mV^\transp
        \in \argmin_{\mB \in \setB} \Omegatilde_{\omega_1 \oplus \omega_2}(\mB).
    \end{align}
\end{lemma}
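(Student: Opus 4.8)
The displayed problem is exactly $\min_{\mB \in \setB} g(\mB)$, where
\begin{align*}
  g(\mB) \defeq \inf\set{\Omega_{\omega_1}(\mM_1) + \Omega_{\omega_2}(\mM_2) + \norm[F]{\mW}^2 \,:\, \mM_1 \mW \mM_2 = \mB},
\end{align*}
so the plan is to prove $g(\mB) = \Omegatilde_{\omega_1 \oplus \omega_2}(\mB)$ for every $\mB$ and then, for any minimizer $\mBhat$ of $\Omegatilde_{\omega_1 \oplus \omega_2}$ over $\setB$, to exhibit an aligned factorization attaining $g(\mBhat)$; these together give all the assertions. The scalar identity tying the pieces together is obtained, for $d \ge 0$, by the substitution $z = s_1 s_2$, $\sigma = d/z$ and \Cref{prop:omega-oplus}:
\begin{align*}
  \min_{\substack{s_1, s_2 \ge 1,\ \sigma \ge 0 \\ s_1 s_2 \sigma = d}} \omega_1(s_1) + \omega_2(s_2) + \sigma^2
  \;=\; \min_{z \ge 1} (\omega_1 \oplus \omega_2)(z) + \tfrac{d^2}{z^2}
  \;=\; \widetilde{\omega_1 \oplus \omega_2}(d),
\end{align*}
so the per-coordinate problems in the statement have optimal value $\widetilde{\omega_1 \oplus \omega_2}([\vd]_j)$.

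For the upper bound, given an SVD $\mB = \mU \diag[P \times Q]{\vd} \mV^\transp$ I would take $\mMhat_1 = \mU \mS_1 \mU^\transp$, $\mWhat = \mU \mSigma \mV^\transp$, $\mMhat_2 = \mV \mS_2 \mV^\transp$ built from the per-coordinate minimizers (with trailing entries of $\vs_1, \vs_2$ set to $1$). Then $\mMhat_1 \mWhat \mMhat_2 = \mU \mS_1 \mSigma \mS_2 \mV^\transp = \mB$ since $[\vs_1]_j [\vsigma]_j [\vs_2]_j = [\vd]_j$, and the objective equals $\sum_j \big(\omega_1([\vs_1]_j) + \omega_2([\vs_2]_j) + [\vsigma]_j^2\big) = \Omegatilde_{\omega_1 \oplus \omega_2}(\mB)$ by the identity above; hence $g(\mB) \le \Omegatilde_{\omega_1 \oplus \omega_2}(\mB)$, and applying this with $\mB = \mBhat$ produces the claimed aligned solution of the original problem once the matching lower bound is in hand.

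For the lower bound I would first reduce to symmetric $\mM_1, \mM_2$ with all singular values $\ge 1$: absorbing the residual rotations from the SVDs of $\mM_1, \mM_2$ into $\mW$ (which leaves $\norm[F]{\mW}$, both spectra, and $\mM_1 \mW \mM_2$ unchanged) makes $\mM_1, \mM_2$ positive semidefinite, and a monotonicity argument rules out singular values below $1$ (lowering one strictly increases the corresponding $\omega_i$ while, to preserve the product, not decreasing $\norm[F]{\mW}$). With $\mM_1, \mM_2 \succeq \mI$ invertible, $\mW$ is pinned and the objective is $\sum_j \omega_1(\sigma_j(\mM_1)) + \sum_j \omega_2(\sigma_j(\mM_2)) + \norm[F]{\mW}^2$; writing $z_j \defeq \sigma_j(\mM_1)\sigma_j(\mM_2) \ge 1$, the definition of $\oplus$ gives $\omega_1(\sigma_j(\mM_1)) + \omega_2(\sigma_j(\mM_2)) \ge (\omega_1 \oplus \omega_2)(z_j)$, and iterating the multiplicative singular-value inequality $\prod_{j \le k} \sigma_j(\mX\mY) \le \prod_{j \le k} \sigma_j(\mX)\sigma_j(\mY)$ on $\mB = \mM_1 \mW \mM_2$ gives $\prod_{j \le k} z_j \sigma_j(\mW) \ge \prod_{j \le k} [\vd]_j$ for all $k$ (all sequences sorted in decreasing order).

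The remaining step---showing that these constraints force $\sum_j (\omega_1 \oplus \omega_2)(z_j) + \sigma_j(\mW)^2 \ge \sum_j \widetilde{\omega_1 \oplus \omega_2}([\vd]_j)$, i.e.\ that the aligned configuration is optimal---is the main obstacle. The clean route is to show that at the optimum the constraint is coordinatewise tight, $z_j \sigma_j(\mW) = [\vd]_j$, after which each term is bounded below by $\min_{\zeta \ge 1} (\omega_1 \oplus \omega_2)(\zeta) + [\vd]_j^2 / \zeta^2 = \widetilde{\omega_1 \oplus \omega_2}([\vd]_j)$; establishing coordinatewise tightness amounts to a rearrangement/exchange argument on the reduced scalar variables $(z_j, \sigma_j(\mW))$ (reducing to pairs of coordinates at a time), which I expect to be the delicate computation. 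A tempting shortcut---pushing the weak log-majorization $([\vd]_j) \prec_{w\log} (z_j \sigma_j(\mW))_j$ through $\widetilde{\omega_1 \oplus \omega_2}$---works only when $t \mapsto \widetilde{\omega_1 \oplus \omega_2}(e^t)$ is convex, which one can check holds for the standard penalties $\omega(v) = |v-1|^p$, $p \ge 1$ (using \Cref{prop:omega-sub-suqadratic}), but may fail for exotic quasi-convex $\omega$; the exchange argument avoids this issue. Combining the two bounds yields $g = \Omegatilde_{\omega_1 \oplus \omega_2}$, and the construction of the second paragraph, applied to a minimizer $\mBhat$ of $\Omegatilde_{\omega_1 \oplus \omega_2}$ over $\setB$, exhibits the stated aligned solution with singular-value blocks given by the per-coordinate problems.
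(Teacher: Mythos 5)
Your upper bound and the scalar identity $\min\{\omega_1(s_1)+\omega_2(s_2)+\sigma^2 : s_1 s_2\sigma = d,\ s_1,s_2\ge 1,\ \sigma\ge 0\} = \widetilde{(\omega_1\oplus\omega_2)}(d)$ are correct and match what the paper uses, but the proposal has a genuine gap exactly where you flag it: the lower bound $g(\mB)\ge\Omegatilde_{\omega_1\oplus\omega_2}(\mB)$, i.e.\ the claim that no misaligned factorization beats the aligned one, is never established. You reduce it to a statement about the scalar data $(z_j,\sigma_j(\mW))$ constrained only by the Horn inequalities $\prod_{j\le k} z_j\sigma_j(\mW)\ge\prod_{j\le k}[\vd]_j$, and then defer the ``coordinatewise tightness'' to an unspecified exchange argument. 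But this relaxation is too lossy to support such an argument in general: once you have discarded the exact constraint $\mM_1\mW\mM_2=\mB$ and kept only weak log-majorization, the inequality $\sum_j\widetilde{(\omega_1\oplus\omega_2)}(z_j\sigma_j(\mW))\ge\sum_j\widetilde{(\omega_1\oplus\omega_2)}([\vd]_j)$ is precisely equivalent to Schur-type monotonicity of $t\mapsto\widetilde{(\omega_1\oplus\omega_2)}(e^{t})$ under weak majorization of the logs, which is the convexity property you yourself concede may fail for admissible quasi-convex $\omega$ (e.g.\ take $[\vd]_1=[\vd]_2=d$ and products $dc,\ d/c$ with $c>1$: these satisfy your relaxed constraints, and the desired inequality is exactly midpoint log-convexity). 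So an exchange argument carried out purely on the relaxed scalar variables cannot close the gap; any correct argument must retain more of the matrix structure than the singular-value products.

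This is where the paper's route differs and is the step you are missing. The paper fixes $\mB\in\setB$, takes $\mM_1=\mU\mS_1\mU^\transp$, $\mM_2=\mV\mS_2\mV^\transp$ (symmetric PSD without loss of generality, as in the footnote), and eliminates $\mW$ exactly via the constraint: $\mSigma=\mU^\transp\mW\mV=\mS_1^{-1}\mU^\transp\mB\mV\mS_2^{-1}$, so $\norm[F]{\mW}=\norm[F]{\mS_1^{-1}\mU^\transp\mB\mV\mS_2^{-1}}$. For fixed spectra $\mS_1,\mS_2$ this is a minimization over the orthogonal factors $\mU,\mV$ alone, and a rearrangement (von Neumann--type) argument shows it is minimized by taking $\mU,\mV$ to be the left and right singular vectors of $\mB$, aligning the smallest entries of $\mS_1^{-1},\mS_2^{-1}$ with the largest singular values of $\mB$. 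With the singular vectors aligned, the objective separates coordinatewise into exactly your scalar problems, and the identity $\widetilde{(\omega_1\oplus\omega_2)}$ together with \Cref{prop:omega-oplus} (which forces $s_1,s_2\ge 1$) yields the equivalent optimization over $\mB\in\setB$. To repair your write-up, replace the log-majorization relaxation by this exact-constraint alignment step (or prove the alignment claim directly, which is the actual ``delicate computation''); the rest of your outline then goes through.
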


\begin{proof}
    First we show that the singular vectors are shared.
    Fix a matrix in the constraint set $\mB \in \setB$. 
    Define the singular value decompositions $\mM_1 = \mU \mS_1 \mU^\transp$ and $\mM_2 = \mV \mS_2 \mV^\transp$, and let $\mSigma = \mU^\transp \mW \mV$.
    Then we have the linear constraint $\mU^\transp \mB \mV = \mS_1 \mSigma \mS_2$, or equivalently $\mSigma = \mS_1^{-1} \mU^\transp \mB \mV \mS_2^{-1}$. Thus 
    $\norm[F]{\mW} = \norm[F]{\mSigma}$ is minimized when $\mU$ and $\mV$ are chosen to align the smallest values of $\mS_1^{-1}$ and $\mS_1^{-2}$ with the largest singular values of $\mB$---in other words,
    $\mU$ and $\mV$ are the left and right singular vectors of $\mB$, respectively.
    We thus have the singular value decompositions $\mW = \mU \mSigma \mV^\transp$ and $\mB = \mU \mD \mV^\transp$ where $\mD = \mS_1 \mSigma \mS_2$.

    It is now clear that the optimization is a sum over aligned eigenvalues and so is equivalent to \cref{eq:lemma_proof_separable}. Any values of $\vs_1$ or $\vs_2$ not part of this optimization ($j > \min\set{P, Q}$) must be 1. Now observe that
    \begin{align}
        \minimize_{s_1, s_2 \geq 0}
        \;
        \omega_1(s_1) + \omega_2(s_2) \;\; \text{s.t.} \;\; s_1 s_2 = a
    \end{align}
    is equivalent to solving
    \begin{align}
        \minimize_{s_1, s_2 \geq 0}
        \;
        \omega_1(s_1) + \omega_2(\tfrac{a}{s_1})
    \end{align}
    and taking $s_2 = \tfrac{a}{s_1}$, but this is simply the definition of $\omega_1 \oplus \omega_2$. By \Cref{prop:omega-oplus}, we must have $s_1, s_2 \geq 1$, and then by the definition of the effective penalty, we obtain the optimization over $\mB$.
\end{proof}

\subsubsection{Proof of \texorpdfstring{\Cref{thm:structureless}}{Theorem 1}}
\begin{proof}
We apply \Cref{lemma:equivalent_opt} with $Q = 1$, $\mW = \vtheta - \vtheta_0$, $\setB$ equal to the linear constraint set, $\omega_1 = \omega$, and $\omega_2 = \chi_{\set{1}}$, forcing $\mM_2 = 1$, so we have only $\mM_1 = \mM$. Now $\omega_1 \oplus \omega_2 = \omega_1 = \omega$. $\mMhat$ must have all but one eigenvalues equal to 1, and the other eigenvalue $s$ must have corresponding eigenvector aligned with $\mBhat = \vbetahat$. Note that the singular value of $\mW$ is simply $\sigma = \norm[2]{\vthetahat - \vtheta}$, similarly for $\mB$ the singular value is $d = \norm[2]{\vbetahat}$. We must have $s \sigma = d$, which gives the form of the stated results.

For the adapted kernel, we simply evaluate 
\begin{align}
    \nabla_{\vtheta_0} f_{\vtheta_0}(\vx) \mMhat^2 \nabla_{\vtheta_0} f_{\vtheta_0}(\vx')^\transp 
    &= \nabla_{\vtheta_0} f_{\vtheta_0}(\vx) \nabla_{\vtheta_0} f_{\vtheta_0}(\vx')^\transp
    + (s^2 - 1) \nabla_{\vtheta_0} f_{\vtheta_0}(\vx) 
    \norm[2]{\vbetahat}^{-2} \vbetahat \vbetahat^\transp
    \nabla_{\vtheta_0} f_{\vtheta_0}(\vx')^\transp, 
\end{align}
which is equal to the stated adapted kernel.
\end{proof}

\subsubsection{Proof of \texorpdfstring{\Cref{thm:model:nn}}{Theorem 2}}

\begin{proof}
We apply \Cref{lemma:equivalent_opt} for each $\ell$. The only care needed is in defining the constraint set $\setB$. Proceeding in $\ell$ and fixing choices of the previous $(\mB_{\ell'})_{\ell'=1}^{\ell - 1}$, we can always define $\setB$ in terms of the remaining $\mB_\ell$ that can satisfy the linear constraint. For all such paths, the resulting equivalent optimization has the same form, and so we have the stated equivalent optimization for the linear constraint involving all $(\mB_\ell)_{\ell=1}^L$.
\end{proof}

\section{Experimental details}

Code is available at \url{https://github.com/dlej/adaptive-feature-perspective}.

\subsection{MNIST regression experiment}
\label{sec:mnist_reg_details}

In this experiment we use a 3 layer multi-layer perceptron (MLP) with 128 units at each layer and ReLU activation implemented in PyTorch~\cite{paszke2017automatic} and with no bias. The network had a single output for regression. We used $N = 500$ training points from digits 2 and 3 from MNIST. We vectorized each image and normalized it to have zero mean and unit variance. For constructing target labels $y_i$, we froze the first two layers of a random Gaussain initialization of the network and only trained the last layer on predicting $\pm 1$ labels corresponding to classes 2 and 3 with a best linear fit. The resulting $y_i$ had 95.2\% accuracy when thresholded at 0 in predicting the original classes. Then we trained two networks newly initialized networks, one to predict $y_i$, and the other to predict $\mathrm{sign}(y_i)$.
For training we used stochastic gradient descent (SGD) with initial learning rate of 0.1 and a cosine annealing schedule, and 0.9 momentum.

To plot the kernels, we compute the average tangent features evaluated at 50 points along the linear path from $\theta_0$ to $\theta$. We order the points according to increasing value of $y_i$. We use different color scales for each kernel: for $K_0$, we use a maximum value of 4, for $K$, we use a maximum value of 10, and for $K_{\hat{y}}$, we use a maximum value of 2.

\end{document}